\def\eqref#1{equation~\ref{#1}}
\def\1{\bm{1}}
\DeclareMathAlphabet{\mathsfit}{\encodingdefault}{\sfdefault}{m}{sl}
\SetMathAlphabet{\mathsfit}{bold}{\encodingdefault}{\sfdefault}{bx}{n}
\newcommand{\R}{\mathbb{R}}
\DeclareMathOperator*{\argmax}{argmax}
\newtheorem{definition}{Definition}
\newtheorem{theorem}{Theorem}
\newtheorem{observation}[theorem]{Proposition}
\newtheorem{corollary}[theorem]{Corollary}
\newcommand{\cert}{\texttt{cert}}
\newcommand{\todo}[1]{}
\title{Limitations of Piecewise Linearity for\\Efficient Robustness Certification}
\author{Klas Leino \\
Carnegie Mellon University \\
\texttt{kleino@cs.cmu.edu} \\
}
\date{}
\icmltitlerunning{Limitations of Piecewise Linearity for Efficient Robustness Certification}
\begin{document}

\onecolumn
\icmltitle{Limitations of Piecewise Linearity for Efficient Robustness Certification}

{
\centering
\textbf{Klas Leino} \\
Carnegie Mellon University \\
\texttt{kleino@cs.cmu.edu} \\
}
\vspace{3em}

\begin{abstract}

Certified defenses against small-norm adversarial examples have received growing attention in recent years; though certified accuracies of state-of-the-art methods remain far below their non-robust counterparts, despite the fact that benchmark datasets have been shown to be well-separated at far larger radii than the literature generally attempts to certify.
In this work, we offer insights that identify potential factors in this performance gap.
Specifically, our analysis reveals that piecewise linearity imposes fundamental limitations on the tightness of leading certification techniques.
These limitations are felt in practical terms as a greater need for capacity in models hoped to be certified efficiently.
Moreover, this is \emph{in addition} to the capacity necessary to learn a robust boundary, studied in prior work.
However, we argue that addressing the limitations of piecewise linearity through scaling up model capacity may give rise to potential difficulties---particularly regarding robust generalization---therefore, we conclude by suggesting that developing \emph{smooth} activation functions may be the way forward for advancing the performance of certified neural networks.

\end{abstract}

\section{Introduction}
\label{sec:intro}



Since the discovery of \emph{adversarial examples}~\citep{szegedy14adversarial}, defenses against malicious input perturbations to deep learning systems have received notable attention.
While many early-proposed defenses---such as \emph{adversarial training}~\citep{madry18pgd}---are heuristic in nature, a growing body of work seeking \emph{provable} defenses has arisen~\citep{leino21gloro,wong18kw,fromherz21projections,trockman21orthogonalizing,croce19mmr,leino21relaxing,cohen19smoothing,huang21local_lipschitz,li19bcop,lee20local_margin,jordan19geocert,zhang18crown,singla22minmax_variant}.
Generally, such defenses attempt to provide a certificate of \emph{local robustness} (given formally in Definition~\ref{def:local_robustness}), which guarantees a network's prediction on a given point is stable under small perturbations (typically in Euclidean or sometimes $\ell_\infty$ space); this precludes the possibility of \emph{small-norm} adversarial examples on certified points.

The success of a certified defense is typically measured empirically using \emph{verified robust accuracy} (VRA), which reflects the fraction of points that are both (i) classified correctly and (ii) certified as locally robust.
Despite the fact that perfect robust classification (i.e., 100\% VRA) is known to be possible on standard datasets at the adversarial perturbation budgets used in the literature~\citep{yang20acc_rob_tradeoff}, this possibility is far from realized in the current state of the art.
For example, on the benchmark dataset CIFAR-10, state-of-the-art methods offering deterministic guarantees of $\ell_2$ robustness\footnote{In this work we primarily consider certified defenses that provide a deterministic guarantee of local robustness, as opposed to a statistical guarantee.
For further discussion of this point, see Section~\ref{sec:related}.} have remained at approximately 60\% VRA~\citep{leino21gloro,trockman21orthogonalizing,huang21local_lipschitz,singla22minmax_variant}, while non-robust models handily eclipse 95\% accuracy.

It is difficult to precisely account for this discrepancy; though
among other reasons, state-of-the-art methods typically use loose bounds to perform certification---as exact certification is (for general ReLU networks) NP-complete~\citep{katz17reluplex,sinha18np-hard}---which conceivably leads to falsely flagging truly robust points or to over-regularization of the learned model.
While conservative approximations may be necessary to perform efficient certification (and to facilitate efficient robust training), it is certainly possible that they foil reasonable hopes for ``optimality.''
In this work, we offer further insight into the shortcomings of modern certification techniques by analyzing their limitations in the context of the architectural settings in which they are conventionally employed.

In particular, we find that \emph{piecewise linearity}---a practically ubiquitous property of neural networks considered in the certification literature (e.g., standard ReLU and the more recently popularized ``MinMax''~\citep{anil19minmax} activations are both piecewise linear)---fundamentally limits the power of \emph{Lipschitz-based} $\ell_2$ local robustness certification.
In effect, we argue, this means that extra capacity is needed \emph{simply for facilitating efficient certification}---in addition to whatever capacity may be required for learning a robust boundary (e.g., as examined by \citet{bubeck21robust_capacity}).

On the other hand, perhaps surprisingly, we prove that free from the constraint of piecewise linearity, Lipschitz-based certification is powerful enough to perform complete certification on any decision boundary, provided the implementation of the function giving rise to the boundary is under the learner's control
(indeed, this is consistent with the fact that the highest performing certified defenses incorporate Lipschitz-based certification into training).
These latter findings suggest that continued progress towards improving state-of-the-art VRA may be enabled through carefully chosen \emph{smooth} activation functions,\footnote{Or at least, activation functions which enable learning curved (as opposed to piecewise linear) functions.} which do not inherently limit the power of what are currently the most promising forms of efficient local robustness certification.

In summary, the primary contributions of this work are as follows:
(1) we show that piecewise linearity imposes inherent limitations on the tightness of efficient robustness certification---our primary focus is Lipschitz-based certification, but we discuss similar limitations of other methods in Appendix~\ref{app:limitations_of_other_methods}; (2) we prove that Lipschitz-based certification is fundamentally powerful for tight robustness certification, provided (i) the robust learning procedure has power over the implementation of the classifier, and (ii) the hypothesis class is \emph{not} limited to piecewise linear networks; and (3) we demonstrate that tight Lipschitz-based certification may require significant capacity overhead in piecewise-linear networks.
These findings offer a new perspective on the sticking points of modern certified training methods, and suggest possible paths forward. 

We begin in Section~\ref{sec:limitations} by introducing the limitations piecewise linearity imposes on robustness certification, starting generally, and narrowing our focus specifically to Lipschitz-based certification.
We then discuss the role that capacity plays in mitigating these limitations in Section~\ref{sec:capacity}, which concludes with a discussion of the implications of our findings, both retrospectively and prescriptively.
Finally, we discuss related work in Section~\ref{sec:related}, and offer our concluding remarks in Section~\ref{sec:conclusion}.

\section{Limitations of Piecewise Linearity}
\label{sec:limitations}


The main insights in this work stem from the simple, yet crucial observation that the points lying at a fixed Euclidean distance from a piecewise-linear decision boundary, in general, do not themselves comprise a piecewise-linear surface. \todo{illustrate with figure}
Therefore, in order for a certification procedure to precisely recover the set of robust points---those which lie a distance of at least $\epsilon$ from the decision boundary---it must be capable of producing a boundary between robust and non-robust points that is \emph{not} piecewise-linear, even on networks that \emph{are}.
However, as we will see, Lipschitz-based certification, for example, is in fact constrained to produce a piecewise-linear ``certified frontier'' on piecewise-linear networks, as the set of just-certifiable points essentially corresponds to a level curve in the output of the network being certified.

On the other hand, if the level curves of the function being certified correspond (up to some constant factor) to their distance from the decision boundary (and must therefore include \emph{smooth curves}), Lipschitz-based certification identifies precisely the points that are truly $\epsilon$-locally robust, provided a tight bound on the Lipschitz constant.
As we will make clear, this has important implications regarding the power of Lipschitz-based certification in properly suited network architectures.

In the remainder of this section, we formalize this intuition and discuss some of its implications.
Section~\ref{sec:limitation_theorems} introduces our main theorem regarding the limitations imposed by piecewise linearity, along with the necessary background and definitions.
Section~\ref{sec:lipschitz-based} narrows the focus to Lipschitz-based certification, showing that despite being powerful in general, it is fundamentally limited within the hypothesis class of piecewise linear networks.
Finally, Section~\ref{sec:corners} presents a thought experiment that provides basic intuition about the possible scale of the problems caused by these limitations.

\subsection{Fundamental Limitations to Certification Completeness}
\label{sec:limitation_theorems}

For our purposes, we will consider a neural network to be a function $f : \R^n \to \R^m$ mapping $n$-dimensional inputs to \emph{logit} values corresponding to $m$ different classes.
From the network function $f$, we derive a neural classifier, $F : \R^n \to \R^m$, by letting $F(x) = \argmax_{i\in[m]}f_i(x)$.
When it is clear from the context which we are referring to, we will use the term ``neural network'' for both the network function $f$ and its corresponding classifier $F$.
Note that two different neural network functions, $f$ and $f'$, may lead to the same predictions everywhere, i.e., $\forall x ~.~ F(x) = F'(x)$.
When this happens, we say that $f$ and $f'$ share the same \emph{decision boundary}, where the decision boundary is simply the set of points where $f_i(x)= f_j(x)$ for some $i\neq j \in [m]$.

In this work, we consider the problem of local robustness certification.
As in prior work, we define local robustness as a property of a point $x$ and classifier $F$, parameterized by a \emph{perturbation budget}, or \emph{robustness radius}, $\epsilon$, as in Definition~\ref{def:local_robustness}.

\begin{definition}[$\epsilon$-Local Robustness]
\label{def:local_robustness}
A classifier $F : \R^n \to [m]$ is $\epsilon$-locally robust at point $x\in\R^n$, with respect to norm $||\cdot||$, if
$$\forall x'\in\R^n ~.~ ||x - x'|| \leq \epsilon \implies F(x) = F(x').$$
\end{definition}

A \emph{certification procedure}, $\cert$, is a function that takes a neural network, $f$, a point, $x$, and a perturbation budget, $\epsilon$, and produces a label in $\{0, 1\}$, where an output of 1 means that $f$ is certified as $\epsilon$-locally robust at $x$.
A valid certification procedure must be \emph{sound}, i.e., $\cert(f, x, \epsilon) = 1 \implies \text{$F$ is $\epsilon$-locally robust at $x$}$; however, it need not be \emph{complete}, i.e., it may be the case that $\cert(f, x, \epsilon) = 0$ and yet $F$ is in fact $\epsilon$-locally robust at $x$.

For a given certification procedure, let the \emph{certified regions} of $f$, $C_\cert(f, \epsilon) = \{x : \cert(f, x, \epsilon)\}$ be the set of points that can be positively certified by $\cert$.
Similarly, let the \emph{robust regions} of $f$ be given by the set $R(F, \epsilon) = \{x : \text{$F$ is $\epsilon$-locally robust at $x$}\}$ of $\epsilon$-locally robust points (note that, in contrast to $C_\cert$, $R$ does not depend on the implementation of $f$, only its classification outputs, given by $F$).

Soundness entails that $\forall f ~.~ C_\cert(f, \epsilon) \subseteq R(F, \epsilon)$,
but clearly it is desirable for $C_\cert(f, \epsilon)$ to match $R(F, \epsilon)$ as tightly as possible; 
when this is achieved perfectly we can consider $\cert$ to be ``complete.''
However, as $C_\cert(f, \epsilon)$ can depend on the underlying function, $f$, which has a surjective mapping to classifiers, $F$, derived from the same hypothesis class, we must be careful in defining completeness precisely.
Let $\mathcal F$ be a \emph{hypothesis class}---a family of functions of type $\R^n \to \R^m$, e.g., that are captured by some neural network architecture.
We will also use the slight abuse of notation, $F\in \mathcal F$, to denote any $F : \R^n \to [m]$ such that there exists a function $f' \in \mathcal F$ which produces the same labels as $F$ on all inputs, i.e., $\forall x ~.~ F(x) = \argmax_{i\in[m]}f_i'(x)$.
We say that a certification procedure, $\cert$, is complete on $\mathcal F$ if all possible decision boundaries achievable by functions in the hypothesis class have at least one implementation in $\mathcal F$ for which $\cert$ perfectly recovers the true robust regions.
This is stated formally in Definition~\ref{def:completeness}.

\begin{definition}
\label{def:completeness}
A certification procedure, $\cert$, is complete on hypothesis class, $\mathcal F$, if for $\epsilon > 0$
$$
\forall F \in \mathcal F ~.~ \exists f' \in \mathcal F ~.~ \Big(\forall x ~.~ F(x) = \argmax_{i\in[m]}f'_i(x)\Big) ~\land~ \Big(C_\cert(f', \epsilon) = R(F, \epsilon)\Big)
$$
\end{definition}

Essentially, completeness over a hypothesis class entails a notion of compatibility between the certification procedure and the hypothesis class;
specifically, it means that for any decision boundary expressible by the hypothesis class, it is possible for a learning procedure to produce a model that implements the decision boundary in a way that makes the certification procedure complete.
Definition~\ref{def:completeness} provides a key relaxation from a stricter notion of completeness that would require $C_\cert(f, \epsilon) = R(F, \epsilon)$ for all $f$, as this would not be achievable by any polynomial certification procedure\footnote{Assuming $P \neq NP$.} \citep{katz17reluplex,sinha18np-hard}.
By requiring tight certification only modulo the decision boundary, we avoid this limitation, splitting the responsibility for completeness between the certification procedure, the learning algorithm, and the hypothesis class.

Next, we will also find it useful to define the \emph{certified frontier} of $F$ under $\cert$ (Definition~\ref{def:certified_frontier}); essentially, the set of points that are just barely certified, which lie at the frontier of the certified regions.
We will similarly define the \emph{robust frontier} as the set of points that are just barely $\epsilon$-locally robust, which lie at the frontier of the robust regions.

\begin{definition}[Certified Frontier]
\label{def:certified_frontier}
The certified frontier of a neural network, $F : \R^n \to [m]$, under certifier, $\cert$, at perturbation budget, $\epsilon$, is the set of points 
$$
\Delta\big(C_\cert(f, \epsilon)\big) = \left\{~x : \cert(f, x, \epsilon) ~~\land~~ \Big(\forall \delta > 0 ~.~ \neg\cert\big(f, x, \epsilon + \delta\big)\Big) ~\right\}.
$$
\end{definition}

We now turn to the specifics of one of our main results, namely, that \emph{piecewise linearity is a limiting factor for tight certification}.
Of course, as alluded to earlier, some certification procedures \emph{do} achieve complete certification on piecewise-linear networks---e.g., \citep{jordan19geocert,tjeng19mip}---however, such methods are invariably exponential.
Thus, we characterize the set of \emph{piecewise-linear limited} (PLL) methods in Definition~\ref{def:pll}.
Intuitively, a certification procedure is PLL if it is constrained to produce piecewise-linear certified frontiers on piecewise-linear models.

\begin{definition}[Piecewise-linear Limited Certification]
\label{def:pll}
A certification procedure, $\cert$, is piecewise-linear limited (PLL) if
$$\forall f ~.~ \text{$f$ is piecewise-linear} \implies \text{$\Delta\big(C_\cert(f, \epsilon)\big)$ is piecewise-linear}$$
\end{definition}

Note that the robust frontier of a network $F$ is, in general, \emph{not} piecewise linear, even if $F$ (and thus its decision boundary) is piecewise linear. \todo{illustrate this with figure}
Thus, if the certified frontier of $\cert$ is piecewise linear, $\cert$ cannot be complete, i.e., $C \neq R$.
Moreover, this means that any piecewise-linear limited certification procedure cannot be complete on the hypothesis class of piecewise linear networks (Theorem~\ref{thm:linear_limitations}).
The proof of Theorem~\ref{thm:linear_limitations} is given formally in Appendix~\ref{app:thm1_proof}.

\begin{theorem}
\label{thm:linear_limitations}
Any piecewise-linear limited certification procedure is incomplete on the hypothesis class of piecewise linear networks.
\end{theorem}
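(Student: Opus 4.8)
The plan is to exhibit, for an arbitrary PLL certifier $\cert$, a single piecewise-linear network $F$ witnessing incompleteness — that is, a network all of whose piecewise-linear reimplementations fail to have $C_\cert(f',\epsilon) = R(F,\epsilon)$. The cleanest choice is the simplest nontrivial boundary: take $n = 2$, $m = 2$, and let $f$ be a linear (hence piecewise-linear) function whose decision boundary is the union of two non-parallel hyperplanes meeting at a ``corner'' — concretely, a function realizing the classifier $F(x) = 1$ iff $x_1 \ge 0 \land x_2 \ge 0$ (this requires a small ReLU gadget but stays within the piecewise-linear hypothesis class). Note that any $f'$ sharing this decision boundary has the \emph{same} robust region $R(F,\epsilon)$, since $R$ depends only on $F$; so it suffices to pin down $R(F,\epsilon)$ geometrically and argue no piecewise-linear set can equal it.

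The key steps, in order, are: (i) Compute $R(F,\epsilon)$ explicitly for this $F$. The complement of $R(F,\epsilon)$ is the set of points within Euclidean distance $\epsilon$ of the boundary $\{x_1 = 0, x_2 \ge 0\} \cup \{x_2 = 0, x_1 \ge 0\}$; near the interior of each boundary ray this is a slab of width $2\epsilon$, but near the corner at the origin it is rounded off by a circular arc of radius $\epsilon$ (the set of points at distance exactly $\epsilon$ from the corner point). Hence the robust frontier $\Delta(R(F,\epsilon))$ contains a genuine arc of the circle of radius $\epsilon$ centered at the origin — an analytic curve with nowhere-vanishing curvature, which is therefore \emph{not} piecewise linear (it contains no line segment). (ii) Invoke Definition~\ref{def:pll}: since $f'$ is piecewise-linear, $\Delta(C_\cert(f',\epsilon))$ is piecewise-linear, i.e., a locally finite union of segments/polygonal pieces, which in particular contains no curved arc. (iii) Conclude: if we had $C_\cert(f',\epsilon) = R(F,\epsilon)$, then their frontiers would coincide, $\Delta(C_\cert(f',\epsilon)) = \Delta(R(F,\epsilon))$, but the left side is piecewise-linear and the right side contains a circular arc — contradiction. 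Since $f'$ was an arbitrary piecewise-linear reimplementation of $F$, and $F$ lies in the hypothesis class of piecewise-linear networks, $\cert$ is incomplete on that class.

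The main obstacle — and the step deserving the most care — is making precise the claim that $\Delta(R(F,\epsilon))$ ``contains a smooth arc'' in a way that cleanly contradicts ``$\Delta(C_\cert(f',\epsilon))$ is piecewise-linear.'' This needs a working definition of ``piecewise-linear set'' (a finite or locally finite union of polytopes / polyhedral pieces) and the elementary fact that such a set, intersected with any small ball around a point on a circular arc, cannot equal the arc, since a nondegenerate arc is not contained in any finite union of affine subspaces of dimension $< n$. One should also verify the bookkeeping that $\Delta(C_\cert(f',\epsilon)) = \Delta(R(F,\epsilon))$ genuinely follows from $C_\cert(f',\epsilon) = R(F,\epsilon)$ — this is immediate from Definition~\ref{def:certified_frontier} once one observes that $\cert(f',x,\cdot)$ being monotone (a point certified at radius $\epsilon+\delta$ is certified at radius $\epsilon$, by soundness plus the nesting of robustness radii) makes the frontier a function of the region alone, matching the analogous monotonicity of $R$. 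A secondary subtlety is confirming the corner boundary is actually realizable by a piecewise-linear $f$ with $F$ as its induced classifier; a two-hidden-unit ReLU construction such as $f_1(x) = \min\{x_1, x_2\} = -\max\{-x_1,-x_2\}$ versus $f_2(x) = -f_1(x)$ (expressed via ReLUs) suffices, and one should state this gadget explicitly.
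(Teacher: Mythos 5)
Your proposal is correct and takes essentially the same route as the paper's proof: a two-dimensional corner boundary (the paper uses $\max(x,y)=0$, you use its mirror image $\min(x_1,x_2)=0$), the observation that the robust frontier opposite the corner is a circular arc of radius $\epsilon$ and hence not piecewise linear, and Definition~\ref{def:pll} forcing the certified frontier of any piecewise-linear implementation to be piecewise linear, with the fact that $R(F,\epsilon)$ depends only on $F$ ruling out every reimplementation. One minor caveat: your justification that equality of regions forces equality of frontiers via monotonicity of $\cert$ in the radius does not actually follow from soundness alone, but the paper's own proof makes the same frontier-to-region inference without further argument, so this is not a gap beyond the standard set by the paper itself.
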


The proof of Theorem~\ref{thm:linear_limitations} relies on the fact that a piecewise-linear function cannot be equal to a function exhibiting smooth curves.
However, it is known that neural networks, \emph{provided with enough capacity}, can approximate any function with arbitrary precision~\citep{hornik91universal}.
We address this point in Section~\ref{sec:capacity}, where we discuss the implications of Theorem~\ref{thm:linear_limitations} regarding the capacity requirements of tightly certifiable networks.

\subsection{The Power and Limitations of Lipschitz-Based Certification}
\label{sec:lipschitz-based}

We will now narrow our focus to consider the specific family of \emph{Lipschitz-based} certification methods.
Such methods perform certification by using an upper bound, $K$, on the network's Lipschitz constant; essentially, a point is certified if the margin by which the top-predicted class exceeds all other classes is greater than $\epsilon K$.
In our work, we will set aside the details around how the Lipschitz is obtained, though this is also a source of potential looseness in the general approach.
That is, we will (optimistically) take for granted that a tight bound is obtained in our analysis.

Lipschitz-based certification has proven effective in the literature, achieving state-of-the-art performance---when paired with an appropriate training routine---despite its simplicity~\citep{leino21gloro,trockman21orthogonalizing}.
Lipschitz-based certification is advantageous in many ways; in addition to being easy to incorporate into a robust learning objective, it enables zero-cost certification at run time, as the Lipschitz constant does not need to be recomputed after training.
On the other hand, it would seem that Lipschitz-based certification is fundamentally underpowered---the ``global'' Lipschitz constant is a conservative estimate of the local Lipschitz constant, which in turn gives a conservative estimate of how much the net output can change within a given neighborhood.
If a primary sticking point for advancing certified accuracy is loose certification, it is fair to ask how promising Lipschitz-based certification will continue to be.

The philosophy behind incorporating Lipschitz-based certification into training is essentially that the potential shortcomings of Lipschitz-based certification can be addressed by learning a easily certifiable network function.
We show that this intuition is essentially correct.
Perhaps surprisingly, we show that Lipschitz-based certification is sufficiently powerful to be complete on the hypothesis class of Lipschitz functions\footnote{I.e., with bounded Lipschitz constant. Note that this is not a meaningful constraint for neural networks, as any neural network with Lipschitz activation functions and finite weights is Lipschitz in this sense.}
However, we also show that Lipschitz-based certification is PLL, meaning this potential cannot be achieved with a hypothesis class constrained by piecewise linearity.

\subsubsection{Lipschitz-based Certification is Powerful}
\label{sec:lipschitz_is_powerful}

We begin by showing that for any boundary achievable by a Lipschitz network function, when the learner is given control over the precise network function implementing the boundary, it is always possible to find an implementation that can be tightly certified using Lipschitz-based certification.
This is stated formally in Theorem~\ref{thm:lipschitz_power}.

Theorem~\ref{thm:lipschitz_power} further entails that there exists a network function for any $2\epsilon$-separated data that achieves perfect VRA under Lipschitz-based certification.
The proof of Theorem~\ref{thm:lipschitz_power} is given in Appendix~\ref{app:thm2_proof}.

\begin{theorem}
\label{thm:lipschitz_power}
When the hypothesis class, $\mathcal F$, is given as the set of Lipschitz functions, Lipschitz-based certification is complete on $\mathcal F$.
\end{theorem}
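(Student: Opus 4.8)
The theorem asks us to exhibit, for every classifier $F \in \mathcal{F}$, \emph{some} Lipschitz implementation $f'$ of the same classifier for which Lipschitz-based certification is exactly tight. The natural candidate is a signed distance function to the decision regions of $F$. Writing $R_i = \{x \in \R^n : F(x) = i\}$ for the decision region of class $i$, define $f' : \R^n \to \R^m$ by
\begin{equation*}
f'_i(x) \;=\; d\big(x,\ \R^n \setminus R_i\big) \;-\; d\big(x,\ R_i\big),
\end{equation*}
so that $f'_i(x)$ is the distance from $x$ to the complement of $R_i$ when $x \in R_i$ and the negated distance to $R_i$ otherwise. I would then establish three things: (i) $f' \in \mathcal{F}$; (ii) $f'$ induces the classifier $F$; and (iii) $C_\cert(f',\epsilon) = R(F,\epsilon)$ for the Lipschitz-based certifier, which is exactly the content of Definition~\ref{def:completeness} for this $F$.

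Claim (i) is immediate: $x \mapsto d(x,S)$ is $1$-Lipschitz for any fixed set $S$, so each $f'_i$ is $1$-Lipschitz and $f'$ is Lipschitz; since $\mathcal{F}$ is taken to be the class of \emph{all} Lipschitz functions, $f' \in \mathcal{F}$. For claim (ii), observe $f'_i(x) > 0$ exactly on the interior of $R_i$ and $f'_i(x) \le 0$ off $R_i$, so $\argmax_i f'_i(x) = F(x)$ wherever $F$ is locally constant, and the induced decision boundary coincides with that of $F$ (the only ambiguous points being the shared region boundaries, where ties can be broken by the same convention that defines $F$ there).

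The substance is claim (iii). For $x$ in the interior of $R_i$ one has $d(x, R_i) = 0$ and $f'_i(x) = d(x,\R^n\setminus R_i) = \min_{j\ne i} d(x,\overline{R_j})$; call this quantity $\rho(x)$. For $j \ne i$ one has $f'_j(x) = -d(x,\overline{R_j}) \le -\rho(x)$, with equality for the nearest competing region $R_{j^\star}$. Hence the certification margin at $x$ is exactly $f'_i(x) - \max_{j\ne i} f'_j(x) = 2\rho(x)$, and the binding constraint is between class $i$ and $R_{j^\star}$, whose shared boundary passes through a point witnessing $\rho(x)$; along such a boundary the gradients of $f'_i$ and $-f'_j$ align, so the \emph{tight} Lipschitz constant of the difference $f'_i - f'_{j^\star}$ is exactly $2$. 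Therefore $x$ is certified iff $2\rho(x) > 2\epsilon$, i.e.\ iff $\rho(x) > \epsilon$. On the other hand, using that the $R_j$ partition $\R^n$, the closed $\epsilon$-ball around $x$ avoids every other region precisely when $\rho(x) > \epsilon$ (one direction is the triangle inequality; the other is a segment-crossing argument), so $R(F,\epsilon) = \{x : \rho(x) > \epsilon\} = C_\cert(f',\epsilon)$, giving completeness. The remark about $2\epsilon$-separated data then follows by taking $F$ to be any classifier whose regions keep each class at distance $\ge \epsilon$ from all points of the others.

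The main obstacle is making the quantitative bookkeeping in (iii) airtight. First, one must verify that the tight Lipschitz constant of each \emph{binding} margin difference is exactly the factor $2$ that cancels the margin, so the certified set is neither smaller than $R$ (which would only cost completeness) nor larger (which would break soundness); this requires the decision regions to meet along their boundaries regularly enough that the distance-to-complement functions have the expected directional derivatives. Second, one must reconcile the frontier $\{\rho(x) = \epsilon\}$: the strict inequality in the certification rule and the tie-breaking convention used to define $F$ on the decision boundary have to be pinned down so that $C_\cert(f',\epsilon)$ and $R(F,\epsilon)$ agree \emph{exactly}, not merely up to a measure-zero set. These are the points I would expect the appendix proof to handle with care.
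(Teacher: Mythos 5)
Your proposal is correct and is essentially the paper's own argument: the paper implements the boundary as $f'(x) = d(x)\cdot\mathds{1}_{F(x)}$ (distance to the decision boundary times a one-hot encoding), so that each pairwise margin equals the distance to the boundary and each pairwise Lipschitz constant is $1$, while your per-class signed distance functions give margin $2\rho(x)$ with pairwise constants at most $2$---the same construction up to a factor of two. The caveat you flag is not actually needed: since each signed distance is $1$-Lipschitz, every difference $f'_i - f'_j$ has true Lipschitz constant at most $2$, so robust points (margin $2\rho(x)\geq 2\epsilon$) are certified whether or not the constant is exactly $2$, and the reverse inclusion follows from soundness of Lipschitz-based certification with correct constants; the frontier/tie-breaking subtlety you mention is likewise left implicit in the paper's own proof.
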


\subsubsection{Lipschitz-based Certification is Limited by Piecewise-linearity}
\label{sec:lipschitz_is_limited}

Despite the power of Lipschitz-based certification for general functions, when restricted to the hypothesis class of piecewise linear networks, it becomes fundamentally limited.
That is, formally, Lipschitz-based certification is PLL (Proposition~\ref{obs:lipschitz_pll}).

\begin{observation}
\label{obs:lipschitz_pll}
Lipschitz-based certification is piecewise-linear limited.
\end{observation}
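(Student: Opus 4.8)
The plan is to reduce the claim to the fact that level sets of piecewise-linear functions are piecewise-linear. By Definition~\ref{def:pll}, it suffices to show that whenever $f$ is piecewise-linear the certified frontier $\Delta\big(C_\cert(f,\epsilon)\big)$ is piecewise-linear. Recall that Lipschitz-based certification, equipped with an upper bound $K$ on the relevant Lipschitz constant(s), certifies $x$ at radius $\epsilon$ precisely when the classification margin at $x$ --- the amount by which the top logit of $f$ exceeds every other logit --- is at least $\epsilon K$. Equivalently, there is a \emph{certified-radius function} $g : \R^n \to \R$ with $\cert(f, x, \epsilon') = 1 \iff g(x) \ge \epsilon'$, so that $C_\cert(f,\epsilon) = \{x : g(x) \ge \epsilon\}$; in the simplest formulation $g(x) = m_f(x)/K$, where $m_f(x)$ is the top-minus-runner-up logit margin of $f$ at $x$, and with per-class-pair Lipschitz bounds $g$ becomes a pointwise minimum of finitely many such ratios. (Note that tightness of $K$ is irrelevant here, since $K$ is a constant in any case; and either the strict or the non-strict inequality convention for $\cert$ leads to the same conclusion below.)

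First I would check that $g$ is piecewise-linear whenever $f$ is. A piecewise-linear network partitions $\R^n$ into finitely many polyhedral cells on each of which every logit $f_i$ is affine; hence each pairwise margin $f_i - f_j$ is affine on each cell, the pointwise maximum $\max_i f_i$ and the second-largest logit are piecewise-linear over a common polyhedral refinement (a finite pointwise max / second-max of affine functions), their difference $m_f$ is piecewise-linear, and dividing by the constant $K$ (or taking a finite pointwise minimum of such quotients) preserves this. So $g$ is piecewise-linear, with finitely many affine pieces $g|_{P_k}(x) = a_k^\top x + b_k$ over a polyhedral partition $\{P_k\}$.

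Next I would identify the certified frontier with the level set $g^{-1}(\epsilon)$. Since $\cert(f,x,\epsilon')$ is non-increasing in $\epsilon'$, Definition~\ref{def:certified_frontier} gives $x \in \Delta\big(C_\cert(f,\epsilon)\big)$ iff $g(x) \ge \epsilon$ and $g(x) < \epsilon + \delta$ for every $\delta > 0$, i.e.\ iff $g(x) = \epsilon$. Finally, the level set of a piecewise-linear function is piecewise-linear: $g^{-1}(\epsilon) = \bigcup_k \{x \in P_k : a_k^\top x = \epsilon - b_k\}$, and each set in the union is the intersection of a polyhedron with a hyperplane (itself a polyhedron), or all of $P_k$ (in the degenerate case $a_k = 0$, $b_k = \epsilon$), or empty --- and a finite union of polyhedra is piecewise-linear. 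Hence $\Delta\big(C_\cert(f,\epsilon)\big)$ is piecewise-linear and $\cert$ is PLL.

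I expect the main obstacle to be the first step: stating the Lipschitz-based certification rule at a level of generality that genuinely covers the methods used in practice (a single global Lipschitz bound versus per-class-pair bounds, and the fact that which margin is ``active'' depends on the predicted class) while keeping the certified-radius function $g$ provably piecewise-linear. Once $g$ is known to be piecewise-linear the remaining geometry is routine, the only wrinkle being to note explicitly that in the degenerate cells where $g$ is locally constant and equal to $\epsilon$ the frontier is locally full-dimensional, yet still a finite union of polyhedra and hence still piecewise-linear.
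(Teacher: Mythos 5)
Your proof is correct and follows essentially the same route as the paper's: identify the certified frontier with a level set of the (piecewise-linear) margin function scaled by the Lipschitz bound, and note that level sets of piecewise-linear functions are piecewise-linear. Your extra care with per-class-pair constants, the monotonicity argument identifying the frontier with $g^{-1}(\epsilon)$, and the degenerate locally-constant cells only adds rigor to the same argument.
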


Proposition~\ref{obs:lipschitz_pll} follows essentially because the certified frontier of Lipschitz-based certification corresponds to a particular level curve of the network function, which is piecewise linear whenever the function is.
As a direct consequence of Proposition~\ref{obs:lipschitz_pll} and Theorem~\ref{thm:linear_limitations}, we arrive at Corollary~\ref{cor:lipschitz_incomplete}.

\begin{corollary}
\label{cor:lipschitz_incomplete}
Lipschitz-based certification is not complete on the hypothesis class of piece-wise linear networks.
\end{corollary}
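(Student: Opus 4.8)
The plan is to obtain Corollary~\ref{cor:lipschitz_incomplete} by directly composing the two preceding results, with essentially no additional mathematical content: Proposition~\ref{obs:lipschitz_pll} places Lipschitz-based certification inside the class of piecewise-linear limited procedures, and Theorem~\ref{thm:linear_limitations} asserts that every such procedure is incomplete on the hypothesis class of piecewise linear networks. Composing the two gives the claim immediately.

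Concretely, I would proceed in two steps. First, invoke Proposition~\ref{obs:lipschitz_pll} to conclude that Lipschitz-based certification, regarded as a certification procedure $\cert$, is PLL: for every piecewise-linear $f$ and every $\epsilon > 0$, the certified frontier $\Delta\big(C_\cert(f,\epsilon)\big)$ is piecewise linear. Second, specialize Theorem~\ref{thm:linear_limitations} to this particular $\cert$; since that theorem holds for \emph{any} PLL procedure, it applies here and yields a decision boundary realized by some $F$ in the hypothesis class of piecewise linear networks, together with an $\epsilon > 0$, such that no piecewise-linear implementation $f'$ of that boundary satisfies $C_\cert(f',\epsilon) = R(F,\epsilon)$. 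This is precisely the negation of Definition~\ref{def:completeness} for $\cert$ on the piecewise-linear hypothesis class, so Lipschitz-based certification is not complete there, as claimed.

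The only things warranting a sentence of care are bookkeeping: verifying that the hypothesis class named in the corollary is literally the one to which Theorem~\ref{thm:linear_limitations} refers, and unpacking ``not complete'' as the existential negation of Definition~\ref{def:completeness}, so that exhibiting a single boundary for which \emph{every} in-class implementation fails already suffices. I expect no genuine obstacle at this stage — all the real work lives upstream, in the fact (used by Theorem~\ref{thm:linear_limitations}) that the robust frontier of a piecewise-linear $F$ is in general not piecewise linear, and in the identification (used by Proposition~\ref{obs:lipschitz_pll}) of the Lipschitz-based certified frontier with a level curve of $f$. If one wanted a self-contained argument rather than a two-step syllogism, one could instead inline a concrete witness in the spirit of the corner construction of Section~\ref{sec:corners}, where the $\epsilon$-robust frontier of a piecewise-linear boundary contains a circular arc that no level curve of a piecewise-linear $f$ can match; but given the results already established, the short composition is the natural route.
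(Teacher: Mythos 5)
Your proposal is correct and matches the paper exactly: the paper derives Corollary~\ref{cor:lipschitz_incomplete} as an immediate consequence of Proposition~\ref{obs:lipschitz_pll} (Lipschitz-based certification is PLL) combined with Theorem~\ref{thm:linear_limitations} (every PLL procedure is incomplete on piecewise-linear networks), which is precisely your two-step composition. The bookkeeping you mention (unpacking Definition~\ref{def:completeness}) is handled the same way, so nothing further is needed.
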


Note that taken in the context of Theorem~\ref{thm:lipschitz_power}, Corollary~\ref{cor:lipschitz_incomplete} means that in a sense, the fundamental limitation of Lipschitz-based certification is not intrinsic to its simplicity (e.g., because the local Lipschitz constant might be tighter than the global constant on some functions), but rather, it is related to the hypothesis class of networks being certified.
Put differently, piecewise linearity imposes real limitations on Lipschitz-based certification that cannot be attributed to practical, but non-fundamental, issues, such as efficient computation of Lipschitz bounds, etc.

\subsection{The Problem with Corners and the Curse of Dimensionality}
\label{sec:corners}

The incongruence between the piecewise-linear certified frontier of Lipschitz-based methods, and the robust frontier of a piecewise-linear boundary, which features smooth curves, becomes relevant when the boundary comes to a ``corner,'' or relatively sharp inflection point.
At corners, the robust frontier curves at a fixed radius around the corner, while the certified frontier, absent aid from additional capacity (see Section~\ref{sec:capacity}), runs parallel to the facets forming the corner, offset by a fixed amount (see Figure~\ref{fig:corner_illustration} in Appendix~\ref{app:corner_illustration} for an illustration).
The sharper the corner, the larger the difference will be between the corresponding robust and certified regions.
Additionally, we will see that this is also true the higher the dimension of the corner, i.e., the more independent half-spaces meet to create the corner.

As a thought experiment, we will model a $d$-dimensional corner as the intersection of $d$ orthogonal half-spaces.
Assuming the level curves near the corner run parallel to the half-spaces, $h \in H$, forming the corner, in the best case, the certified region is given by the union of half-spaces obtained by flipping each $h\in H$ and shifting it by $\epsilon$. \todo{see figure}
Consider the hypercube of width $\epsilon$ just opposite the corner.
This hypercube lies entirely outside the robust region, meaning all points within it cannot be certified using Lipschitz-based certification.
However, only the points intersecting the hypersphere of radius $\epsilon$ centered at the corner are truly non-$\epsilon$-robust.
We can compute the ratio of the volume of the hypercube to the intersecting portion of the hypersphere, given by Equation~\ref{eq:sphere_cube_ratio}:
\begin{equation}
\label{eq:sphere_cube_ratio}
\frac{\pi^{d/2}}{\Gamma\left(\frac{d}{2} + 1\right)} \cdot \left(\frac{\epsilon}{2\epsilon}\right)^d
\end{equation}
As the dimension increases, this ratio tends to zero, meaning that in high dimensions, \emph{almost all points in this region opposite the corner are incorrectly uncertified}.
Furthermore, the maximum distance from an uncertified point within this region to the boundary is equal to the diagonal of the hypercube, which is given by $\sqrt{d} \cdot \epsilon$.
This means that even points that are \emph{significantly more robust than required} may yet be uncertified.

\section{The Role of Capacity}
\label{sec:capacity}


The primary limitation of Lipschitz-based certification in piecewise-linear networks derives from the fact that we cannot have smoothly curved level curves in such networks (or, more generally, that PLL certification methods cannot have smoothly curved certified frontiers in such networks).
However, while this is true in the strictest sense, a function with smooth curves can be approximated with arbitrary precision, given sufficient capacity.
In other words, increased network capacity may be one possible option to mitigate the fundamental limitations discussed throughout Section~\ref{sec:limitations}.
In this section, we investigate the capacity requirements necessary for tight PLL certification in piecewise-linear networks.

While the precise meaning of ``capacity'' in a quantifiable sense is a bit nebulous, for our purposes, we will consider capacity in a piecewise-linear network to correspond to the number of piecewise-linear regions.
This grows with the number of internal neurons, though the relationship may vary depending on other aspects of the network architecture, e.g., the depth of the network.

Previous work has studied the capacity implications for learning a robust decision boundary, finding that separating points while controlling Lipschitzness may require additional capacity beyond what would be necessary to simply separate them~\citep{bubeck21robust_capacity}.
Besides the capacity required to represent the decision boundary in a robust network, our work asks instead about the capacity required to \emph{tightly certify} a given boundary.
We find that in a piecewise linear network, even if the boundary is optimal---in that all points in the distribution are indeed a distance of $\epsilon$ or more from it---the network may require \emph{additional capacity} to be able to prove this using the Lipschitz constant.

Taking the data distribution aside, we consider the goal of certifying all points that are sufficiently far from the boundary.
As highlighted in Section~\ref{sec:corners}, in places where the decision boundary forms high-dimensional ``corners,'' there may be relatively large volumes of points that are $\epsilon$-far from the boundary but cannot be certified as long as the level curves simply run parallel to the boundary.
In such cases, tight certification requires extra capacity specifically to round out the level curves around the corners in the decision boundary.
We begin by demonstrating this concept via an illustrative example.
We conclude by discussing the implications of our results and suggest avenues for future work.

\subsection{An Illustrative Example of how Capacity Enables Tight Certification}
\label{sec:capacity_example}

\begin{figure}
\begin{subfigure}{0.2\linewidth}
\includegraphics[width=\linewidth]{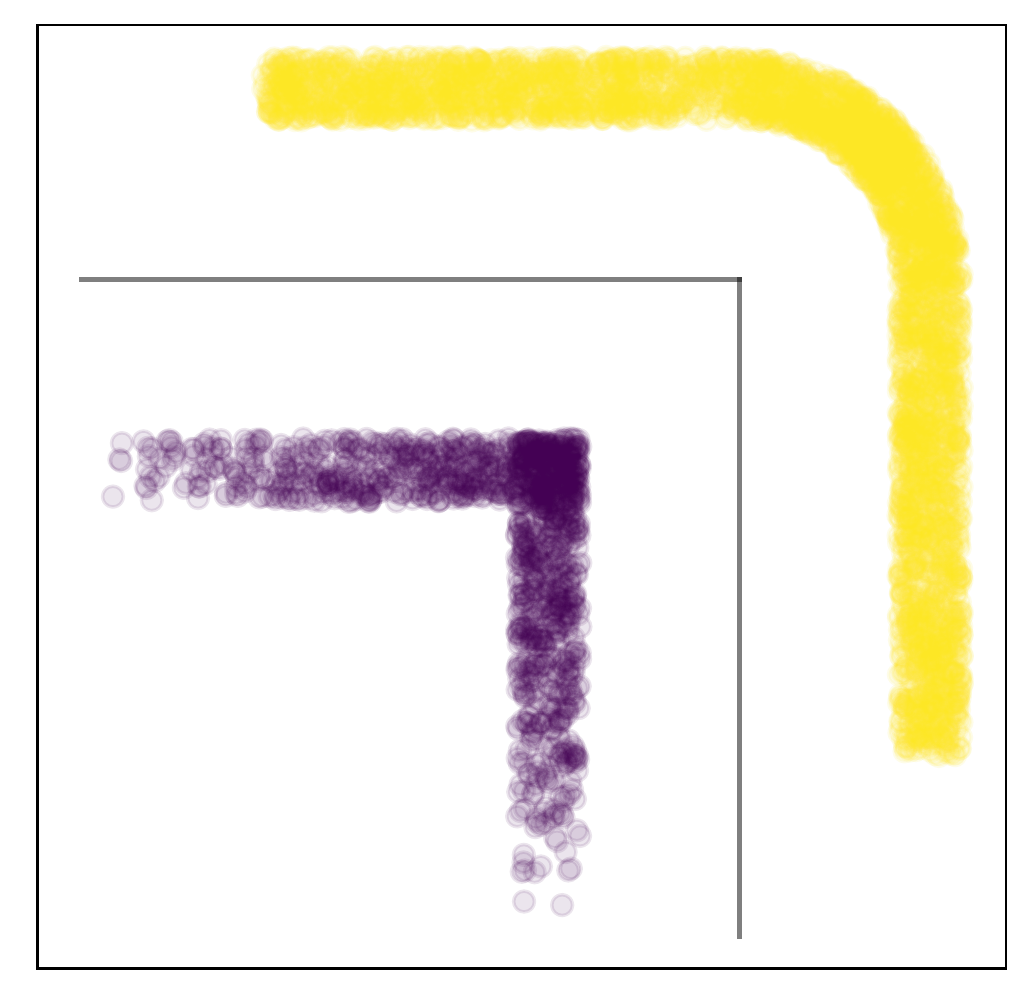}
\caption{}
\label{fig:example_data}
\end{subfigure}%
\begin{subfigure}{0.2\linewidth}
\includegraphics[width=\linewidth]{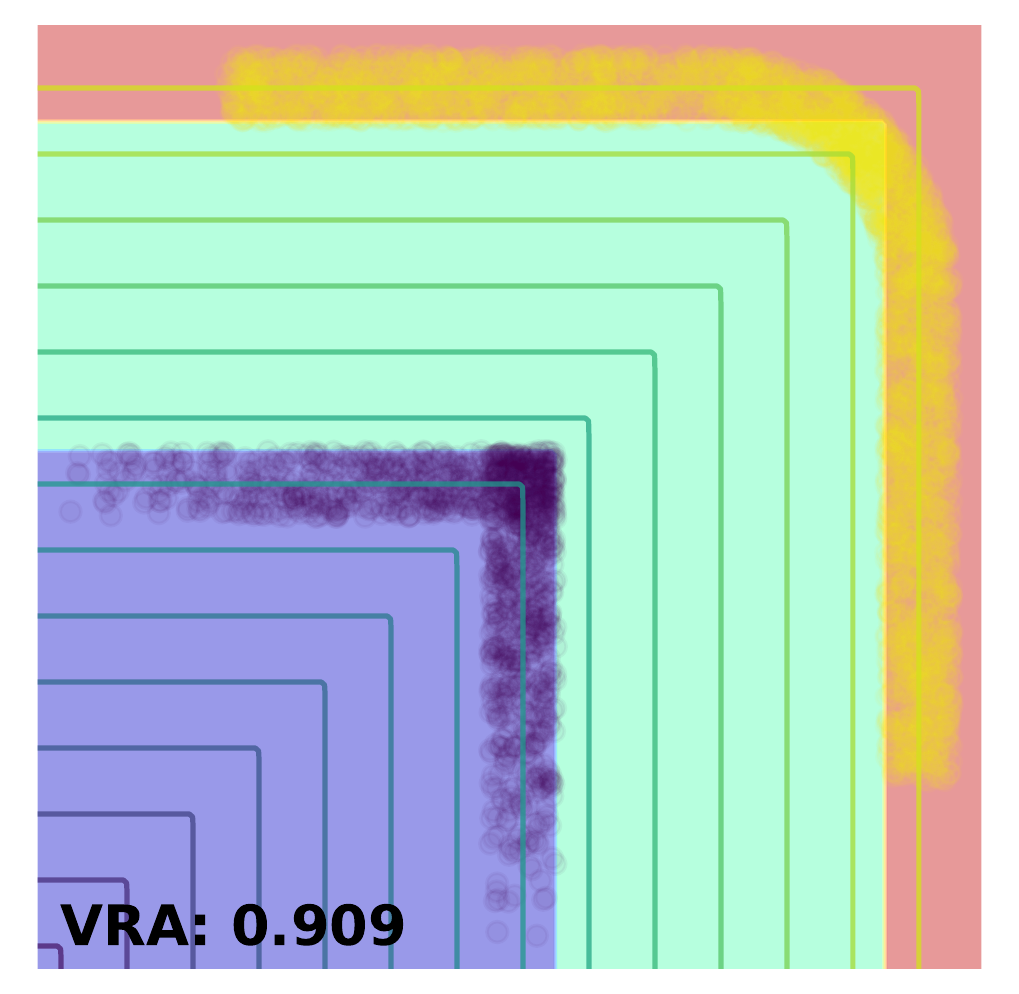}
\caption{}
\label{fig:opt_min_capacity}
\end{subfigure}%
\begin{subfigure}{0.2\linewidth}
\includegraphics[width=\linewidth]{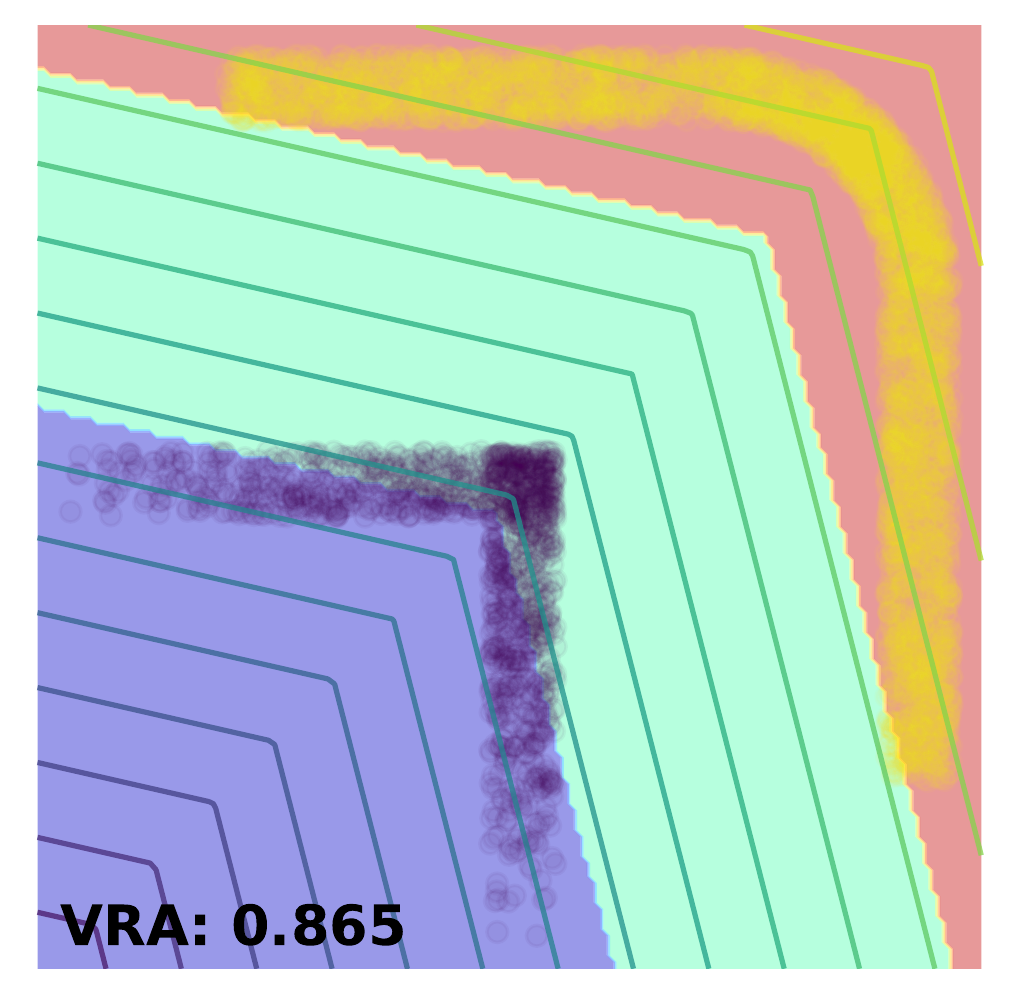}
\caption{}
\label{fig:learned_min_capacity}
\end{subfigure}%
\begin{subfigure}{0.2\linewidth}
\includegraphics[width=\linewidth]{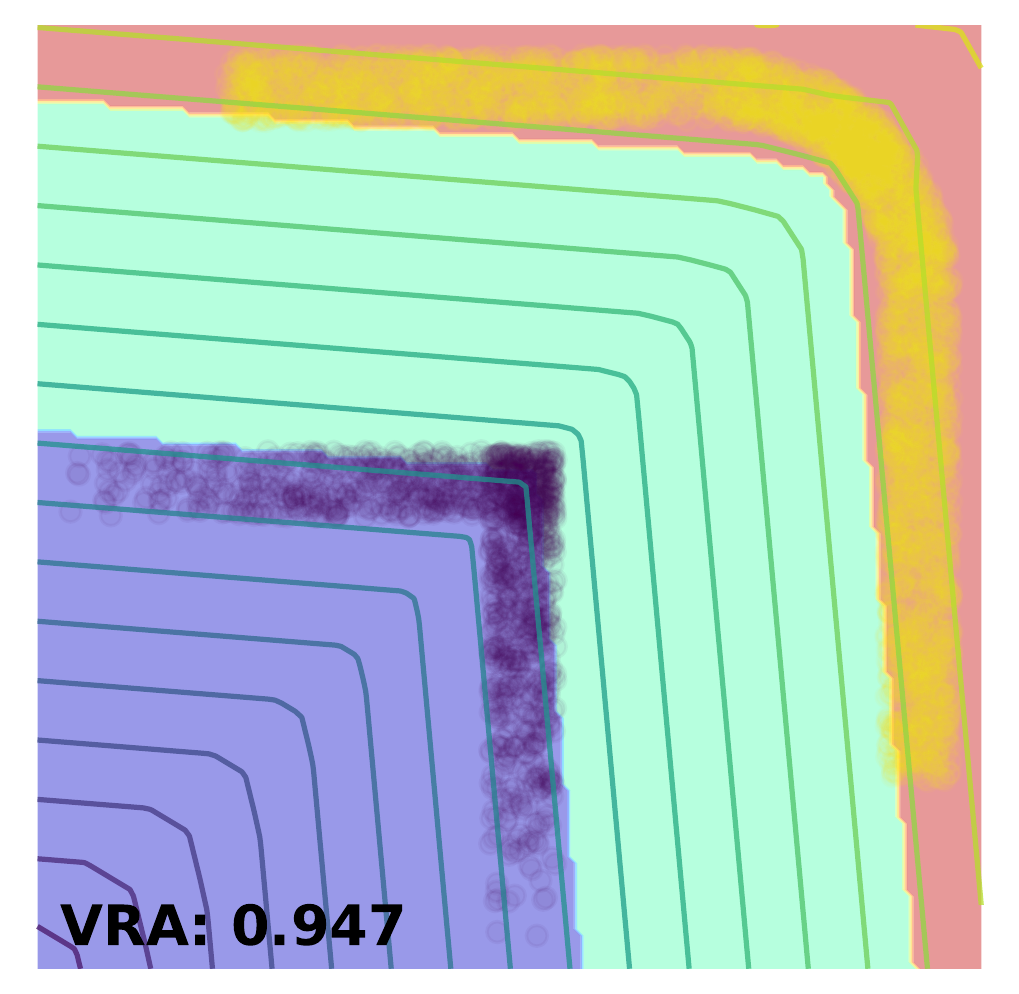}
\caption{}
\label{fig:learned_10x_capacity}
\end{subfigure}%
\begin{subfigure}{0.2\linewidth}
\includegraphics[width=\linewidth]{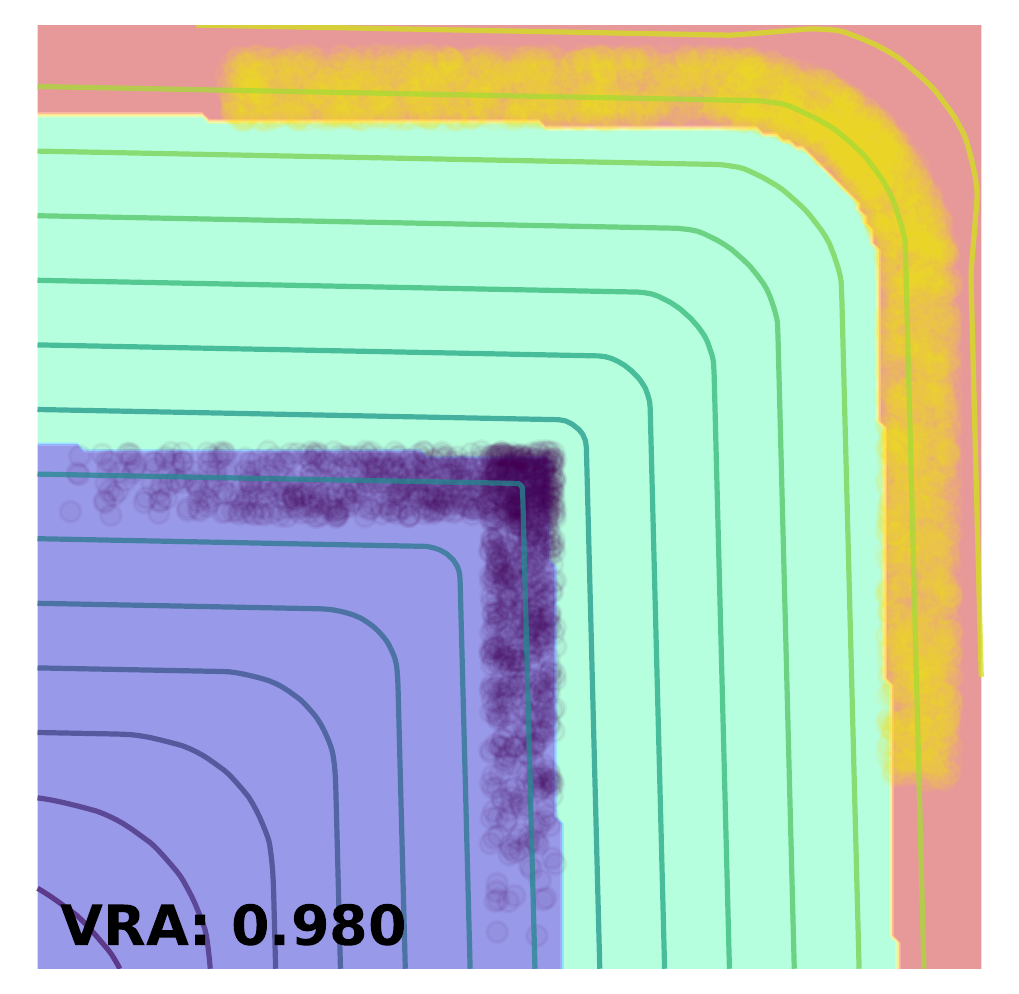}
\caption{}
\label{fig:learned_100x_capacity}
\end{subfigure}%
\caption{
	\textbf{(\subref{fig:example_data})} synthetic dataset with two classes (shown in purple and yellow) derived to be positioned at distance $\epsilon$ from an ideal boundary that forms a $90^\circ$ corner.
	\textbf{(\subref{fig:opt_min_capacity})} MinMax network implementing (with minimal capacity) an optimal boundary with respect to which all points are $\epsilon$-locally robust. The non-certifiable region is shown in green, and the level curves are also depicted. The Lipschitz constant of this network is 1, but not all points in the top right quadrant can be certified.
	\textbf{(\subref{fig:learned_min_capacity}, \subref{fig:learned_10x_capacity}, \subref{fig:learned_100x_capacity})} networks trained with $1\times$, $10\times$, and $100\times$ the minimal capacity required for the optimal boundary in (\subref{fig:opt_min_capacity}). With $10\times$ capacity the learned VRA exceeds that of the minimal implementation, but near-perfect performance is not obtained with less than $100\times$ capacity.
	Details are provided in Appendix~\ref{app:experiment_details}.
}
\label{fig:capacity_example}
\vspace{1em}
\end{figure}

As an example of how Lipschitz-based certification can require excess capacity beyond what is necessary to learn a robust boundary, we consider a synthetic 2-D dataset that can robustly separated by a simple piecewise linear boundary.
An illustration is provided in Figure~\ref{fig:example_data}.
We begin with a decision boundary given by $B = \{(x_1,x_2) : \max(x_1, x_2) = 0 \}$; this boundary separates points with negative x- and y-coordinates from points in the other three quadrants, and forms a $90^\circ$ corner at the origin.
The data are then generated such that all the points with label 0 lie a distance of at least $\epsilon$ below and to the right of the boundary, and the points with label 1 lie a distance of at least $\epsilon$ above and to the right of the boundary.
Specifically, the 1-labeled points curve around the boundary such that there is a tight margin of exactly $2\epsilon$ about the boundary.

By construction, the function $f(x) = [0, \max(x_1, x_2)]$ produces logit values that yield the boundary $B$, with respect to which all points in the dataset are $\epsilon$-locally robust.
This function can be trivially implemented with minimal capacity by a simple MinMax network, $f(x) = \sigma(x W^1) W^2$, where $\sigma$ is the MinMax activation function, and $W^1$ and $W^2$ are given by Equation~\ref{eq:example_weights}.
\begin{equation}
\label{eq:example_weights}
W^1 = \left[
\begin{array}{cc}
    1 & 0 \\
    0 & 1 \\
\end{array}
\right]
\quad\quad 
W^2 = \left[
\begin{array}{cc}
    0 & 0 \\
    0 & 1 \\
\end{array}
\right]
\end{equation}

Furthermore, the Lipschitz constant of $f$ is 1;\footnote{More properly put, the Lipschitz constant of $|f_1 - f_0|$---which represents the margin by which the predicted class exceeds the non-predicted class---is 1.}
this can even be tightly obtained by taking the layer-wise product of the layer operator norms, as is typically done in practice.
Hence, the points that can be certified will be those for which $|f_1(x) - f_0(x)| \geq \epsilon$; that is, the points outside the level curves $\max(x_1, x_2) = -\epsilon$ and $\max(x_1, x_2) = \epsilon$.
However, we see that this certified frontier fails to certify many points in the positive x-y quadrant, despite the fact that all the points are indeed robust with respect to the boundary of $f$.
This is depicted in Figure~\ref{fig:opt_min_capacity}.

In order to certify these points, we need the level curve corresponding to $f_1(x) - f_0(x) = \epsilon$ to bend smoothly around the boundary, rather than forming the same $90^\circ$ angle.
This requires more capacity.

To gain a sense of how this plays out in practice, we consider adding capacity via expanding the number of neurons in the hidden layer (which contained only two neurons in our minimal example).
In Figures~\ref{fig:learned_10x_capacity} and \ref{fig:learned_100x_capacity}, we show the boundaries of two additional learned networks, $g$ and $h$, with 20 and 200 internal neurons, respectively.
We see that increasing the number of internal neurons by an order of magnitude yields a better set of level curves, but the network $g$ still must compromise as the level curves are not smooth enough to tightly follow the contour of the data.
Finally, when we increase the number of internal neurons by \emph{two orders of magnitude}, we at last obtain a function $h$ that achieves nearly 100\% VRA on our sample data.
This function, as desired, forms essentially smooth level curves that bend around the boundary corner with a radius of $\epsilon$.
Interestingly, $h$ learns a boundary that is somewhat different from the boundary originally used to derive the data;
however, both boundaries can be thought of as ``equivalent'' in the sense that they produce the same margin, reflecting that the optimal boundary for this dataset is not unique.

\paragraph{Discussion.}
In our example, we needed \emph{100 times} more neurons than were necessary to construct an optimal decision boundary in order to tightly certify the boundary with the Lipschitz constant.
While it is difficult to extrapolate from this toy example to a ``real world'' scenario, our results suggest that smoothing the level curves may require significant overhead beyond the capacity necessary to produce a truly robust boundary.

Another aspect of this experiment worth noting is that when the network had insufficient capacity to learn an optimally robust, tightly certified boundary (e.g., in Figures~\ref{fig:learned_min_capacity} and \ref{fig:learned_10x_capacity}), the resulting model tended to compromise by making the corner less sharp (compared to the desired $90^\circ$ angle).
Geometrically, when the boundary has an inflection with a wider angle, the difference between the certifiable frontier and the frontier of robust points is less pronounced (consider for example, what happens then the inflection approaches $180^\circ$).
In effect, this means that while under-parameterization of piecewise-linear models may be a problem for robust model performance in practice, this limitation may be (at least in part) manifested as an under-fit model as opposed to one with many robust but non-certifiable points.
This is reflected in the empirical results for certifiably trained models in the literature, which typically have lower ``clean accuracies'' than their standard-trained counterparts.
However, we note that these models also exhibit a discrepancy between their certified accuracy and their vulnerability to actual attacks, leaving the possibility that they may also fail to certify some truly robust points.

\subsection{Potential Drawbacks of the Capacity Escape Hatch}
\label{sec:discussion}

As we have seen, by adding capacity, we can help overcome the limitations of piecewise linearity by enabling the network to approximate smooth curves around corners in the decision boundary.
For universal tight certification, this needs to be done in the neighborhood of all corners on the decision boundary.
To the extent that each corner requires independent capacity, hopes for the scalability of such an approach seem slim; albeit, VRA only requires tight certification on the data manifold, meaning that extra capacity should only be needed in places where the decision boundary has sharp inflections near in-distribution points.

However, this, too, presents an interesting problem.
Namely, the network only has incentive to allocate capacity to round the level curves in the places that are necessary to certify its \emph{training set}; i.e., where inflections in the decision boundary encroach on training points.
Meanwhile, if similar inflections exist near test points not seen during training, the learned network may fail to certify them---even if the boundary is general, and even if it is also robust.
In other words, we are faced with not only the challenge of learning a generally robust boundary, but additionally of learning a generally certifiable function.
Indeed, generalization of VRA is empirically observed to be worse than the corresponding ``clean accuracy'' would indicate---a principle that has been noted in prior work due to its privacy implications~\citep{yeom20robust_overfitting}.

\paragraph{A Proposed Way Forward.}
Another possibility for addressing the fact that Lipschitz-based certification is PLL is to expand the hypothesis class to enable smooth curves in the decision surface.
Ultimately, our analysis shows that Lipschitz-based certification is most effective when the level curves of the network function accurately reflect the $\ell_2$ distance to the boundary, which requires the possibility of smooth curves.
This goal may be best achieved by purpose-built activations, as piecewise linearity stems from the choice in activation function.

State-of-the-art Lipschitz-based certifiable training methods have enjoyed increased success in recent years through leveraging MinMax activations~\citep{anil19minmax}---or a variant thereof proposed by \citet{singla22minmax_variant}---which are piecewise linear.
MinMax has a distinct advantage over the more common ReLU activation, due to its \emph{gradient-norm-preserving} (GNP) property, which \citeauthor{anil19minmax} demonstrate is key for tight, efficient Lipschitz bounds.
While the need for gradient norm preservation remains clear, we posit that some form of smoothness is an additional desirable property, as it would free the hypothesis class from piecewise linearity.
We believe the task of designing suitable smooth activation functions for PLL-certified networks is a promising avenue for future work.

\section{Related Work}
\label{sec:related}


\paragraph{Power and Limitations of Lipschitz-based Certification.}
Several of the early efforts around robustness certification focused on post hoc certification of networks trained outside of the control of the certifier.
This is a fundamentally hard problem, shown to be NP-complete by \citet{katz17reluplex} and \citet{sinha18np-hard}.
While this fundamentally limits the tractability of complete post hoc certification, the limitation is of lesser concern for modern approaches that incorporate certification into the training objective, thus encouraging learning models that better facilitate efficient certification.

The specific limitations of Lipschitz-based certification have also been of great interest in the prior literature.
Most of these results particularly consider the practical problem of bounding a neural network's Lipschitz constant.
For example, \citet{huster2018limitations} note that the common method of using the product of the layer-wise operator norm cannot tightly bound the Lipschitz constant of even basic functions in ReLU networks.
\citet{anil19minmax} study this point further demonstrating a trade-off between expressive power and efficient Lipschitz bound computation in networks with non-gradient-norm-preserving activation functions.
This limitation is handled by using network architectures with gradient-norm-preserving activation function such as MinMax, and orthonormal linear operators (though the latter need not necessarily be strictly enforced as it is a learnable objective).
\citeauthor{anil19minmax} conjecture that such networks are \emph{universal 1-Lipschitz function approximators}, suggesting that learning any Lipschitz function in such a way that the Lipschitz constant can be bounded tightly and efficiently is possible.
By contrast, our work points to previously unstudied limitations that are separate from the Lipschitz constant bounding problem, and are indeed not mitigated through the use of MinMax activations, which are piecewise linear.
However, we propose that the limitations brought forth in our work may similarly be addressed via novel activation functions.

On the flip side, previous work has also touched on the power of Lipschitz-based certification.
\citep{leino21gloro} showed that certification with the global Lipschitz constant can be as powerful as with the local Lipschitz constant when the model is under the learner's control.
We extend this result in a number of key ways.
First, we prove a stronger result that can be stated for all points, rather than for a finite set of points certified via the local Lipschitz constant.
Second, we explicitly consider the hypothesis class, demonstrating that smoothness is a necessary condition to achieve this result.

\paragraph{Capacity Requirements for Robust Neural Networks.}
Understanding the role of capacity in deep neural networks has been a topic of interest in general, particularly due to the demonstrated effectiveness of highly over-parameterized models~\citep{zhang17capacity,bubeck21robust_capacity,arora18overparameterization,du19overparameterization,garg22capacity}.
Recent work has also investigated this subject in the particular context of robust models.
\citet{bubeck21robust_capacity} showed that under mild regularity assumptions, learning a highly accurate model with small Lipschitz constant requires significantly more parameters than would be required with no constraint on the Lipschitz constant---where the capacity overhead, in terms of the number of parameters, scales with the dimension.
While a controlled Lipschitz constant is central to successful Lipschitz-based certification, our work (e.g., our example in Section~\ref{sec:capacity_example}), shows that a Lipschitz interpolation between points of opposite class is not sufficient for certification.
As our analysis is focused on certification rather than Lipschitz interpolation, we complement the work of \citeauthor{bubeck21robust_capacity}, showing that even further capacity may be required to appropriately bend the function's level curves to facilitate Lipschitz-based certification.

In addition to the information-theoretic capacity requirements, large numbers of parameters in deep networks may be necessary to facilitate efficient learning~\citep{arora18overparameterization,du19overparameterization}.
Recently, \citet{garg22capacity} showed that robust learning in particular may require even greater over-parameterization than standard learning.
Results such as these are complimentary to work such as ours, which focus on minimal parameterizations.

\paragraph{Randomized Smoothing.}
Our work has focused on deterministic certification.
By contrast, \emph{randomized smoothing}~\citep{cohen19smoothing,lecuyer18smoothing} has become a popular method that instead provides a \emph{statistical guarantee} of robustness.
Randomized smoothing (RS) essentially modifies the original function by predicting the expected label under Gaussian\footnote{Prior work has considered other distributions as well \citep{yang20rs_all_shapes}} noise.
These predictions are empirically determined through sampling, with the statistical certificate depending on the unanimity of the sample labels.
While RS provides a weaker robustness guarantee, 
it solidly outperforms deterministic methods in terms of certified accuracy.
Interestingly, it seems clear that RS is \emph{not} PLL, since it naturally smooths piecewise linear networks, leading to a smooth boundary and certified frontier---this may be one of the keys to its success.
This observation gives further support to the notion that state-of-the-art deterministic methods may be held back by piecewise linearity, and may benefit from smooth activation functions.

\section{Conclusions and Future Directions}
\label{sec:conclusion}


Incorporating Lipschitz-based certification into robust training procedures has proven to be the most effective way to achieve high deterministic $\ell_2$ verified-robust accuracy yet considered in the literature.
Due to our Theorem~\ref{thm:lipschitz_power}, there is reason to believe Lipschitz-based certification has the power to remain as promising as current results suggest.
However, we also showed that restricted to the hypothesis class of piecewise-linear networks, as has been the standard regime, Lipschitz-based certification becomes fundamentally limited.
For piecewise-linear networks, this means that tight Lipschitz-based certification may require significantly more parameters, which, even if tractable, can complicate certifiably robust generalization (e.g., see Section~\ref{sec:discussion}).
On the other hand, rather than viewing this as a fundamental drawback for Lipschitz-based certification, we propose that purpose-built activations---with the correct smoothness and gradient-norm-preserving properties---is a promising avenue for future work to free the most promising form of efficient deterministic certification from the limitations of piecewise linearity.

\bibliography{references}
\bibliographystyle{iclr2022_conference}

\clearpage
\appendix
\section{Proofs}


\subsection{Proof of Theorem~\ref{thm:linear_limitations}}
\label{app:thm1_proof}

\paragraph{Theorem Statement.}
\textit{
	Any piecewise-linear limited certification procedure is incomplete on the hypothesis class of piecewise linear networks.
}

\begin{proof}
It suffices to show that there exists a boundary achievable by a piecewise-linear network for which no PLL certification method can tightly certify.
We proceed by producing a piecewise linear boundary that induces a smooth robust frontier.
This is sufficient to prove our theorem, as $\Delta\big(C_\cert(f, \epsilon)\big) \neq \Delta\big(R(F, \epsilon)\big) \implies C_\cert(f, \epsilon) \neq R(F, \epsilon)$.

Consider the 2-D boundary given by $\max(x, y) = 0$.
Clearly, this boundary exists within the class of piecewise linear functions as the function $f(x, y) = \max(x, y)$ is piecewise linear.
Now consider the points in the positive x-y quadrant.
The points in this quadrant that are at distance $\epsilon$ from the boundary are given by $\sqrt{x^2 + y^2} = 0$, which is not piecewise linear.
By definition, any certification method that is PLL must have a certified frontier that is piecewise linear.
Thus, the certified frontier of such any such method cannot be equal to $\sqrt{x^2 + y^2} = 0$ in this quadrant.
\end{proof}

\subsection{Proof of Theorem~\ref{thm:lipschitz_power}}
\label{app:thm2_proof}

\paragraph{Theorem Statement.}
\textit{
	When the hypothesis class, $\mathcal F$, is given as the set of Lipschitz functions, Lipschitz-based certification is complete on $\mathcal F$.
}

\begin{proof}
Let $\mathcal F$ be the set of Lipschitz functions.
Consider the decision boundary of any function $f \in \mathcal F$.
Define $f'$ as follows: let $d(x)$ be the minimum distance of $x'$ from the decision boundary and let $f'(x) = d(x) \cdot \mathds{1}_{F(x)}$, where $\mathds{1}_{F(x)}$ is the one-hot encoding of $F(x)$.

First, observe that $f'_j - f'_i$ is 1-Lipschitz for all $i \neq j$.
To see this consider the following.
The Lipschitz constant is given by
\begin{align}
\label{eq:step_1}
\sup_{x, x'}{
	\frac{
		\left| (f'_j(x) - f'_i(x)) - (f'_j(x') - f'_i(x')) \right|
	}{
		||x - x'||
	}
}
=
\sup_{x, x'}{
	\frac{
		\left| f'_j(x) - f'_j(x') + f'_i(x') - f'_i(x) \right|
	}{
		||x - x'||
	}
}
\end{align}
Consider points $x$ and $x'$, and let us assume that $||x - x'|| = \delta$.
We would like to bound the quantity given by (\ref{eq:step_2}), the numerator in (\ref{eq:step_1}), by $\delta$.
\begin{equation}
\label{eq:step_2}
\left| f'_j(x) - f'_j(x') + f'_i(x') - f'_i(x) \right|
\end{equation}
There are a few cases to consider.
First if $F(x)$ and $F(x')$ are both different from $i$ and $j$, then (\ref{eq:step_2}) is $0 \leq \delta$.
Since (\ref{eq:step_2}) is symmetric in both $i$ and $j$, and $x$ and $x'$, without loss of generality, we will assume $F(x) = j$.
This leaves two cases: when $F(x') = j$, and when $F(x') \neq j$ (in the latter case we will not be concerned with whether or not $F(x') = i$).

In the first case we have
\begin{align}
(\ref{eq:step_2}) 
&= |f'_j(x) - f'_j(x')| = |d(x) - d(x')|\\
&= d(x) - d(x') &\text{without loss of generality}
\end{align}

Let $a$ be the nearest point on the boundary to $x'$, such that which $d(x') = ||x' - a||$. Thus,
\begin{align}
d(x) 
&\leq ||x - a|| &\text{as $a$ is on the boundary} \\
&\leq ||x - x'|| + ||x' - a|| &\text{by the triangle inequality} \\
&= \delta + d(x') \\
\implies d(x) - d(x') &\leq \delta &\text{as desired}
\end{align}

In the second case, $x$ and $x'$ are given different labels and we have
\begin{align}
(\ref{eq:step_2}) 
&= |f'_j(x) + f'_i(x')| \\
&\leq d(x) + d(x') &\text{as $f'_i(x')$ is at most $d(x')$ (achieved when $F(x') = i$)}
\end{align}

Since $x$ and $x'$ are given different labels, there must be at least one part of decision boundary that bisects the line segment connecting $x$ and $x'$; let $a$ be this intersection point.
Additionally, since $a$ is on the boundary, we must have that $d(x) \leq ||x - a||$ and $d(x') \leq ||x' - a||$.
Thus, as desired,
\begin{equation}
d(x) + d(x') \leq ||x - a|| + ||x' - a|| = \delta
\end{equation}
This allows us to conclude that $f'_j - f'_i$ is 1-Lipschitz for all $i \neq j$, as claimed.

The points that are certified by Lipschitz-based certification are those for which (\ref{eq:step_3}) holds, where $j = F(x)$ and $K_{ji}$ is the Lipschitz constant of $f'_j - f'_i$.
\begin{equation}
\label{eq:step_3}
\min_{i\neq j}\left\{f'_j(x) - f'_i(x) - \epsilon K_{ji}\right\} \geq 0
\end{equation}

Notice that when $i \neq F(x)$, $f'_i(x) = 0$.
Thus (\ref{eq:step_3}) can be simplified to $f'_j(x) = d(x) \geq \epsilon$, noting also that $K_{ji} = 1$ $\forall i,j$.
Therefore, the points that can be certified via Lipschitz-based certification are those for which $d(x) \geq \epsilon$, which are precisely the points that are locally robust.
\end{proof}

\subsection{Proof of Proposition~\ref{obs:lipschitz_pll}}

\paragraph{Theorem Statement.}
\textit{
	Lipschitz-based certification is piecewise-linear limited.
}

\begin{proof}
Assume the function, $f$, being certified is piecewise linear.
Without loss of generality, consider inputs $x$ for which the network predicts class $j$.
The margin by which class $j$ surpasses all other classes is given by $m(x) = \min_i\left\{f_j(x) - f_i(x)\right\}$.
Note that $m$ is piecewise linear as $f$ is piecewise linear.
Let $K$ be the Lipschitz constant of $m$.
The largest radius that can be certified at $x$ is then $\nicefrac{m}{K}$.
Thus, the certified frontier is given by $\nicefrac{m}{K} = \epsilon$; this corresponds to the level curve of $m$ corresponding to $m = \epsilon \cdot K$.
Since $m$ is piecewise linear, this level curve is piecewise linear.
Thus, the certified frontier is piecewise linear, and Lipschitz-based certification is PLL.
\end{proof}

\section{Limitations of Other Certification Methods}
\label{app:limitations_of_other_methods}


\subsection{Limitations of Local-Lipschitz-based Certification}

State-of-the-art deterministic $\ell_2$ certified performance is currently achieved using Lipschitz-based certification, which outperforms other types of certified training methods~\citep{leino21gloro,trockman21orthogonalizing} such as those based on convex relaxations---e.g., \citep{wong18kw}---or maximizing linear regions---e.g., \citep{croce19mmr,xiao19relu_stability}.
Unsurprisingly, however, methods that use the \emph{local} Lipschitz constant for certification can achieve similarly high VRA~\citep{huang21local_lipschitz}, though this comes at the cost of significantly slower certification.

The local Lipschitz constant at a point $x$ is given by $K_\epsilon(x)$ in Definition~\ref{def:local_lc}, which essentially corresponds to the maximum slope of the function within an $\epsilon$ neighborhood of $x$.
\begin{definition}
\label{def:local_lc}
The local Lipschitz constant is given by
$$
K_\epsilon(x) = \sup_{x_1, x_2 ~.~\substack{||x - x_1|| \leq \epsilon \\ ||x - x_2|| \leq \epsilon}}\left\{\frac{|f(x_1) - f(x_2)|}{||x_1 - x_2||}\right\}
$$
\end{definition}

\emph{Local-Lipschitz-based certification}, similar to Lipschitz-based certification (Section~\ref{sec:lipschitz-based}), certifies points, $x$, when the margin by which the top-predicted class, $F(x)$, exceeds all other classes is greater than $\epsilon\cdot K_\epsilon(x)$.

While the local Lipschitz constant is always a lower bound for the global Lipschitz constant---and therefore local-Lipschitz-based certification can possibly be tighter---local-Lipschitz-based certification is nonetheless equally limited.

We will consider a generous setting in which the bound used for certification is exact, i.e., where the certification procedure has oracle access to $K_\epsilon(x)$.
Because $K_\epsilon(x)$ is not piecewise linear, local-Lipschitz-based certification is not strictly piecewise-linear limited (PLL) in this setting.
It is worth noting, however, that methods for approximating the local Lipschitz constant may not leverage this smoothness in practice.
Regardless, we show that local-Lipschitz-based certification is incomplete on piecewise-linear networks (Theorem~\ref{thm:local-lipschitz}).

This result is related to the fact that when the learner is given control over the implementation of the boundary, (global) Lipschitz-based certification can match the power of local-Lipschitz-based certification; this result has been proven in a slightly weaker formulation by \citet{leino21gloro}.
We provide an alternative theorem statement and proof here that better aligns with the insights in this work.

\begin{theorem}
\label{thm:local-lipschitz}
Local-Lipschitz-based certification is not complete on the hypothesis class of piecewise-linear networks.
\end{theorem}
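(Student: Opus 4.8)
\textbf{Proof proposal for Theorem~\ref{thm:local-lipschitz}.}

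The plan is to exhibit a single piecewise-linear decision boundary for which \emph{no} implementation in the hypothesis class of piecewise-linear networks can be tightly certified by local-Lipschitz-based certification, thereby contradicting Definition~\ref{def:completeness}. I would reuse the $90^\circ$ corner from the proof of Theorem~\ref{thm:linear_limitations}: take the 2-D boundary $\max(x,y)=0$, which is realized by the piecewise-linear function $f(x,y)=[0,\max(x,y)]$, and focus on the positive $x$--$y$ quadrant, where the true robust frontier is the circular arc $\sqrt{x^2+y^2}=\epsilon$ — a smooth, non-piecewise-linear curve. The key point is that local-Lipschitz-based certification certifies $x$ exactly when $m(x) \geq \epsilon\,K_\epsilon(x)$, where $m$ is the class margin; so the certified frontier is the set where $m(x) = \epsilon\,K_\epsilon(x)$.

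The heart of the argument is to show that for \emph{any} piecewise-linear implementation $f'$ of this boundary, the certified frontier in the open positive quadrant is still piecewise linear, hence cannot equal the arc. First I would note that on a piecewise-linear network the margin function $m$ is piecewise linear, so the domain decomposes into finitely many polytopes on each of which $m$ is affine. On the interior of any one such polytope $P$ that is bounded away from all the ``kinks,'' the function $f'$ is affine in a neighborhood of every point, so for small enough $\epsilon$ relative to the local geometry the local Lipschitz constant $K_\epsilon(x)$ is \emph{constant} on the interior of $P$ (it equals the operator norm of the single affine piece, provided the $\epsilon$-ball stays within $P$). Then on that interior the certified frontier is $\{m(x)=\epsilon\cdot(\text{const})\}$, a piece of a hyperplane. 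Since the positive quadrant minus a neighborhood of the boundary is covered by finitely many such polytopes, the certified frontier there is a finite union of hyperplane pieces, i.e.\ piecewise linear — and a piecewise-linear set cannot coincide with the arc $\sqrt{x^2+y^2}=\epsilon$ on an open region. This gives $C_\cert(f',\epsilon)\neq R(F,\epsilon)$ for every $f'$ implementing the boundary, which is exactly the failure of completeness.

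The main obstacle is handling the regions near the ``kinks'' of $f'$, where $K_\epsilon(x)$ genuinely varies with $x$ (it is a supremum over an $\epsilon$-ball that straddles several affine pieces) and need not be piecewise linear there. The clean way around this is to observe that we do not need the \emph{entire} certified frontier to be piecewise linear — we only need it to fail to contain the arc. So I would instead argue locally: pick any point $x_0$ strictly inside the positive quadrant lying on the arc, and a radius $r>0$ so small that the $(r+\epsilon)$-ball around $x_0$ lies in the positive quadrant and — crucially — that for a judiciously chosen implementation (or, more carefully, after arguing it must hold for \emph{some} implementation) $f'$ is affine on that enlarged ball. Near the corner, though, \emph{no} implementation of the corner boundary can be affine on a full neighborhood (the boundary itself bends there), so the cleanest route is to push $x_0$ out along the arc far from the origin and invoke the fact that far from the corner the boundary is locally a single hyperplane; any implementation $f'$ must still have its margin vanish on that hyperplane, and I claim the margin is then affine on a neighborhood — which requires ruling out gratuitous extra kinks of $f'$ away from the boundary, or else simply noting that if such kinks exist they only make the certified frontier a union of \emph{more} hyperplane pieces, still piecewise linear and still unequal to the arc. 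Making this ``locally piecewise linear near the arc, for every implementation'' claim fully rigorous — in particular pinning down that $K_\epsilon$ contributes only finitely many affine regimes near the arc — is the technical crux; everything else is the same smooth-versus-piecewise-linear incompatibility already used for Theorem~\ref{thm:linear_limitations}.
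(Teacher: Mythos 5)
Your plan---show that for every piecewise-linear implementation the certified frontier near the arc is piecewise linear, then invoke the smooth-vs.-piecewise-linear mismatch---breaks down exactly at the step you yourself flag as the crux, and neither of your proposed repairs closes it. The arc $\sqrt{x^2+y^2}=\epsilon$ consists entirely of points at distance exactly $\epsilon$ from the corner, so the closed $\epsilon$-ball around \emph{every} point of interest reaches the corner, where any implementation of this boundary must switch affine pieces (the zero-margin set bends there); your premise ``the $\epsilon$-ball stays within a single region $P$'' therefore fails on the whole arc, and you cannot shrink $\epsilon$, since it is fixed by the certification problem. Worse, where the ball straddles several pieces, $K_\epsilon(x)$ is piecewise \emph{constant} on regions whose boundaries are $\epsilon$-offsets of the faces of the linear-region complex, and such offset surfaces contain circular arcs; consequently the certified frontier $\{m(x)=\epsilon K_\epsilon(x)\}$ need not be piecewise linear at all---the paper explicitly notes that local-Lipschitz-based certification is \emph{not} PLL, which is precisely why Theorem~\ref{thm:linear_limitations} cannot be applied off the shelf. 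Your fallback that extra kinks ``only add more hyperplane pieces'' is thus unjustified: the danger is exactly that a jump of $K_\epsilon$ along a curved offset surface could trace out a curved certified frontier. The two escape routes you sketch also do not work: picking a ``judiciously chosen implementation'' has the quantifier backwards (incompleteness requires that \emph{every} implementation fail, so you may not choose $f'$), and ``pushing $x_0$ out along the arc far from the origin'' is vacuous because every point of the arc is at distance $\epsilon$ from the origin, while away from the arc the robust frontier is flat, so local affineness there produces no contradiction.

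The paper's proof takes a genuinely different, quantitative route that you would need something like to finish: it examines the robust frontier $\Delta\left(R(F,\epsilon)\right)$ of an arbitrary implementation and splits into two cases. If $K_\epsilon$ is constant along it, local-Lipschitz certification certifies exactly the same points as global-Lipschitz certification, and Corollary~\ref{cor:lipschitz_incomplete} already gives a robust but uncertified point. Otherwise $K_\epsilon$ jumps by some $\delta>0$ drawn from the \emph{finite} set of slope differences of the piecewise-linear margin; choosing two frontier points $x_1,x_2$ with $K_\epsilon(x_1)=K_\epsilon(x_2)+\delta$ and $\|x_1-x_2\|<\epsilon\delta/K$, soundness at $x_2$ gives $m_2\le \epsilon K_\epsilon(x_2)$, and a short estimate shows $m_1/K_\epsilon(x_1)<\epsilon$, so $x_1$ lies at distance exactly $\epsilon$ from the boundary yet cannot be certified. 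Some argument of this kind---exploiting the finitely many possible slopes and the behavior of $K_\epsilon$ at a jump---is what is needed; an argument purely about piecewise linearity of the certified frontier cannot succeed here.
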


\begin{proof}
It suffices to show that there exists a boundary achievable by a piecewise-linear network for which no corresponding piecewise-linear implementation can be tightly certified by local-Lipschitz-based certification.
Recall that by Corollary~\ref{cor:lipschitz_incomplete} there exists such a boundary for (global) Lipschitz-based certification.
We will consider one of the same such boundaries.

For a particular value of $\epsilon$, consider the points $\Delta\left(R(F, \epsilon)\right)$, which are at distance exactly $\epsilon$ from the boundary.
There are two cases to consider: either (1) the local Lipschitz constant is always the same everywhere, i.e., $\forall \epsilon > 0$, $\forall x_1, x_2 \in \Delta\left(R(F, \epsilon)\right)$, $K_\epsilon(x_1) = K_\epsilon(x_2)$, or (2) there is some variation in the local Lipschitz constant, such that $\exists \epsilon > 0$, $x_1, x_2 \in \Delta\left(R(F, \epsilon)\right)$ where $K_\epsilon(x_1) \neq K_\epsilon(x_2)$.

In the first case, we see that $K_\epsilon(x) = K$ (the global Lipschitz constant), meaning that local-Lipschitz-based certification will certify the exact same points as (global) Lipschitz-based certification.
Thus, by Corollary~\ref{cor:lipschitz_incomplete}, there must be a point which is robust at radius $\epsilon$ but not certifiable.

In the second case, without loss of generality, assume $K_\epsilon(x_1) > K_\epsilon(x_2)$.
Because $f$ is piecewise linear, it is comprised of a finite number of linear functions, which in turn have a finite number of distinct slopes (gradient norms).
Thus, if $K_\epsilon(x_1) > K_\epsilon(x_2)$, $K_\epsilon(x_1) - K_\epsilon(x_2) = \delta$ where $\delta$ belongs to some finite set of strictly positive values.

Furthermore, without loss of generality, $x_1$ and $x_2$ can be chosen to be arbitrarily close together, i.e., they lie arbitrarily near a point where the local Lipschitz constant changes.
We will therefore consider $x_1$ and $x_2$ that are chosen such according to Equation~\ref{eq:proof_local_choice_of_distance}.
\begin{equation}
\label{eq:proof_local_choice_of_distance}
||x_1 - x_2|| < \frac{\epsilon \cdot \delta}{K}
\end{equation}

Let $m_2$ be the margin by which the top-predicted class, $F(x_2)$, exceeds all other classes.
The maximum radius that can be certified at $x_2$ is thus $\nicefrac{m_2}{K_\epsilon(x_2)}$.
Note that as certification is sound, we have 
\begin{equation}
\label{eq:proof_local_cert_is_sound}
\frac{m_2}{K_\epsilon(x_2)} \leq \epsilon
\end{equation}

Now consider the maximum radius that can be certified at $x_1$. 
Let $m_1$ be the margin by which the top-predicted class, $F(x_1)$, exceeds all other classes.
The maximum radius that can be certified at $x_1$ is thus $\nicefrac{m_1}{K_\epsilon(x_1)}$

\begin{align}
\frac{m_1}{K_\epsilon(x_1)}
&= \frac{m_1}{K_\epsilon(x_2) + \delta} &\text{by assumption} \\
&\leq \frac{m_2 + K||x_1 - x_2||}{K_\epsilon(x_2) + \delta} &\text{by definition of the Lipschitz constant} \\
&< \frac{m_2 + \epsilon \cdot \delta}{K_\epsilon(x_2) + \delta} &\text{by our choice of $||x_1 - x_2||$ in (\ref{eq:proof_local_choice_of_distance})} \\
&\leq \frac{\epsilon \cdot K_\epsilon(x_2) + \epsilon \cdot \delta}{K_\epsilon(x_2) + \delta} &\text{by (\ref{eq:proof_local_cert_is_sound})} \\
&= \epsilon
\end{align}

Thus, we see that $x_1$ cannot be certified with radius $\epsilon$, despite that its distance from the boundary is exactly $\epsilon$.
\end{proof}

\subsection{Other Piecewise-linear Limited Methods}

Our work focuses primarily on Lipschitz-based certification, which we demonstrate is fundamentally limited on the hypothesis class of piecewise linear networks.
However, this limitation is not due specifically to the use of the Lipschitz constant per se;
instead, we attribute it more generally to the fact that Lipschitz-based certification always produces a piecewise-linear certified frontier on piecewise-linear networks, a property we refer to as PLL (Definition~\ref{def:pll}).
In this section we briefly discuss how this property may apply to other flavors of certification techniques that have been proposed in the literature.

\paragraph{Convex Relaxations and Dual Networks.} 
One classic approach for certification is through convex relaxation.
A survey of such methods is given by \citet{salman19convex}, who point out the limitations (regarding tight certification) of convex relaxations (though the authors do not consider our setting where the learner may control the implementation of the boundary, but rather focus on post hoc certification).
Though many approaches in this family have been proposed, we will consider two baseline methods that capture a primal and dual formulation of convex relaxations:
Fast-Lin~\citep{weng18fastlip}, and
an approach proposed by \citet{wong17kw}, often referred to as ``KW.''

Fast-Lin directly derives upper and lower bounds on the output of a ReLU network in order to determine if an adversarial example might exist.
This is done by iteratively computing upper and lower bounds for the neurons in each layer and using them to replace the ReLU activations with linear upper and lower bounds.
This computation resembles a piecewise-linear network, suggesting that Fast-Lin is PLL.

The KW approach formulates the adversary as an LP that optimizes over the convex outer approximation of the set of top-level activations reachable through a norm-bounded perturbation.
Crucially, for the sake of tractability, the LP can be bounded by the feasible set of the dual, which \citeauthor{wong17kw} show can be expressed as a \emph{dual network}, which resembles a backwards pass in the network being certified.
For ReLU networks, the activations in the dual network are replaced with their upper convex envelopes (a linear function) over the bounded set $[\ell, u]$, where $\ell$ and $u$ represent lower and upper bounds on the pre-ReLU neural activations.
The upper and lower bounds can be iteratively computed in a similar way to in Fast-Lip;
thus, in its simplest form,\footnote{This approach has been refined in subsequent work that we do not consider here~\citep{wong18kw}.} the dual network inherits the piecewise linearity of the original ReLU network being certified, suggesting the resulting certified frontier is piecewise linear, and certification is PLL.

\paragraph{Hyperplane Projections.}
As exact certification is NP-complete, the literature has often turned to training procedures that help simple, approximate certification enjoy greater success.
In piecewise linear networks, the input can be partitioned into a polyhedral complex where each convex region corresponds to a single \emph{activation pattern}, over which the network is linear~\citep{fromherz21projections,jordan19geocert,croce19mmr}.
Motivated by this view of ReLU networks, one family of robust training approaches attempts to expand the linear regions of the network to simplify the combinatorial analysis of the possible ReLU activation patterns~\citep{croce19mmr,xiao19relu_stability}.
\citeauthor{croce19mmr} proposed a simple certification technique for networks trained with their ``Maximum Margin Regularization'' (MMR), where a point, $x$, is certified only if (1) the entire $\epsilon$-ball around $x$ is contained in a single convex activation region, and (2) the linear function corresponding to the region does not have a boundary within $\epsilon$ from $x$.
This approach is clearly PLL, as the certified regions can be obtained by shrinking each activation region (possibly split in two if a linear decision boundary crosses it) by $\epsilon$.
Since the original regions are convex polytopes, so too are the certified regions, thus the certified frontier is piecewise-linear.

In contrast to our findings for Lipschitz-based certification, it is worth noting that the limitations of this approach go beyond PLL, as completeness of the MMR approach is in direct conflict with non-linearity; and moreover, the approach is designed specifically for piecewise-linear networks.

\section{Details on Experiments}
\label{app:experiment_details}


The experiments presented in Figure~\ref{fig:capacity_example} in Section~\ref{sec:capacity} were performed using the \texttt{gloro} Python library, which implements the GloRo Net method of \citet{leino21gloro} for training certifiably robust models by incorporating Lipschitz-based certification into training.
All networks in the experiments consisted of a 1-hidden layer dense network with MinMax~\citet{anil19minmax} activations; three specific architectures were used, with 2, 20, and 200 hidden units, respectively.
Models were trained for 64 epochs, with a batch size of 128.
We chose hyperparameters inspired by those used by \citeauthor{leino21gloro} (see the original paper for details on the meaning of the various hyperparameters); namely, we used GloRo-TRADES loss with $\lambda = 1.2$, we scaled $\epsilon$ logarithmically to its ultimate value of $0.5$ by the half-way point of training, and we linearly decreased the learning rate from $10^{-3}$ to $0$ half-way through training.

\section{An Illustrative Example of the Corner Problem}
\label{app:corner_illustration}


\begin{figure}
\centering
\resizebox{0.72\textwidth}{!}{%
\includegraphics{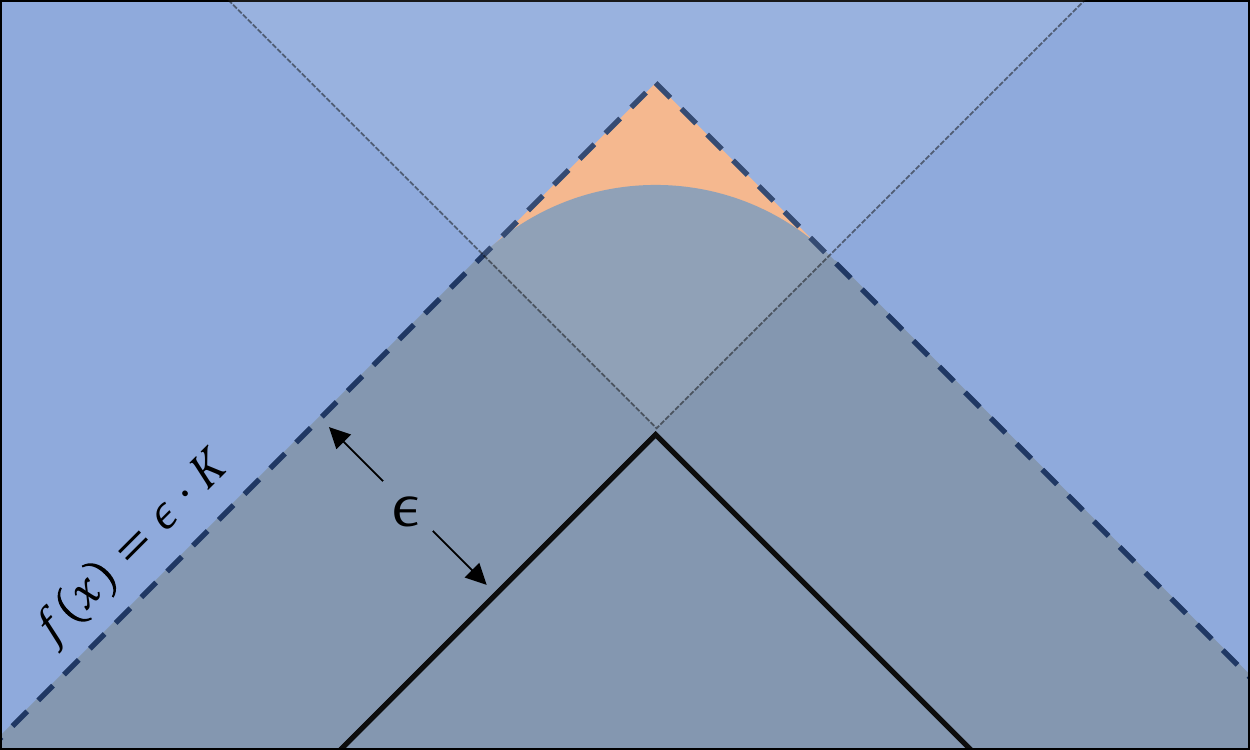}}
\caption{
	\label{fig:corner_illustration}
	Illustration of the ``corner problem'' described in Section~\ref{sec:corners}.
}
\end{figure}

For illustrative purposes a diagram is provided in Figure~\ref{fig:corner_illustration} that serves as a visual explanation of the ``corner problem'' described in Section~\ref{sec:corners}.
The boundary of a neural network, shown by the bold black line, forms a sharp corner.
The complement to the robust region, i.e., the set of points that are \emph{not} robust, is shown in gray.
A simple implementation of this boundary has level curves that make similar sharp corners; the level curve corresponding to the certified frontier is shown by the dotted line, and the certified region is colored in blue.
The region opposite the corner in the boundary is highlighted.
We see that in this region, there is a set of points, shown in orange, that are \emph{not} certified, despite the fact that they are robust, being at distance greater than $\epsilon$ from the boundary.
In this two-dimensional example, these falsely flagged points make up a relatively small fraction of the uncertified points opposite the corner (represented as the union of the orange points and the highlighted gray points in the diagram); however, in high dimensions, virtually \emph{all} uncertified points in this region would be falsely flagged, as indicated by Equation~\ref{eq:sphere_cube_ratio}.

\end{document}